\def\eqref#1{equation~\ref{#1}}
\def\1{\bm{1}}
\DeclareMathAlphabet{\mathsfit}{\encodingdefault}{\sfdefault}{m}{sl}
\SetMathAlphabet{\mathsfit}{bold}{\encodingdefault}{\sfdefault}{bx}{n}
\def\dataset{\mathcal{D}}
\def\param{\theta}
\def\Param{\Theta}
\def\lik{p(\dataset \mid \param)}
\def\prior{\pi(\param)}
\def\priornoarg{\pi}
\def\posterior{\pi(\param \mid \dataset)}
\def\posteriornoarg{\pi(\, \cdot \mid \dataset)}
\def\ebset{\Pi}
\def\KL{\mathbb{KL}}
\def\query{x_{n+1}}
\def\ELBO{\text{ELBO}(q, \priornoarg)}
\def\enspred{p_{\text{ens}}(\, \cdot \mid \query)}
\def\enspi{\priornoarg_{\text{ens}}}
\def\ebposteriornoarg{\priornoarg^*(\, \cdot \mid \dataset)}
\def\marginallik{p_\priornoarg(\dataset)}
\def\P{\mathbb{P}}
\newcommand{\E}{\mathbb{E}}
\title{Deep Ensembles Secretly Perform Empirical Bayes}
\author{\name Gabriel Loaiza-Ganem \email gabriel@layer6.ai\\
      \addr Layer 6 AI
      \AND
      \name Valentin Villecroze \email valentin.v@layer6.ai \\
      \addr Layer 6 AI
      \AND
      \name Yixin Wang \email yixinw@umich.edu\\
      \addr University of Michigan
      }
\begin{document}

\maketitle

\begin{abstract}
Quantifying uncertainty in neural networks is a highly relevant problem which is essential to many applications. The two predominant paradigms to tackle this task are Bayesian neural networks (BNNs) and deep ensembles. Despite some similarities between these two approaches, they are typically surmised to lack a formal connection and are thus understood as fundamentally different. BNNs are often touted as more principled due to their reliance on the Bayesian paradigm, whereas ensembles are perceived as more \emph{ad-hoc}; yet, deep ensembles tend to empirically outperform BNNs, with no satisfying explanation as to why this is the case. In this work we bridge this gap by showing that deep ensembles perform \emph{exact} Bayesian averaging with a posterior obtained with an implicitly learned data-dependent prior. In other words deep ensembles are Bayesian, or more specifically, they implement an \emph{empirical Bayes} procedure wherein the prior is learned from the data. This perspective offers two main benefits: $(i)$ it theoretically justifies deep ensembles and thus provides an explanation for their strong empirical performance; and $(ii)$ inspection of the learned prior reveals it is given by a mixture of point masses -- the use of such a strong prior helps elucidate observed phenomena about ensembles. Overall, our work delivers a newfound understanding of deep ensembles which is not only of interest in it of itself, but which is also likely to generate future insights that drive empirical improvements for these models.
\end{abstract}

\section{Introduction}
Deep learning has proven extremely successful at prediction \citep{krizhevsky2012imagenet, szegedy2015going, he2016deep}, representation learning \citep{mikolov2013efficientestimationwordrepresentations, chen2020simple, radford2021learning, caron2021emerging, oquab2024dinov}, and generative modelling \citep{radford2018improving, brown2020languagemodelsfewshotlearners, ramesh2021zero, rombach2022high, saharia2022photorealistic, yu2022scaling, openai2023gpt4technicalreport}, yet it still lags behind statistical methods \citep{kutner2005applied, 10.7551/mitpress/3206.001.0001, gelman2013bayesian} at uncertainty quantification (UQ) -- for example, off-the-shelf neural network classifiers can be poorly calibrated despite being accurate \citep{guo2017calibration}. The two most popular approaches for UQ in deep learning are Bayesian neural networks \citep[BNNs;][]{welling2011bayesian, graves2011practical, hernandez2015probabilistic, blundell2015weight, gal2016dropout, ritter2018scalable} and ensembles \citep{lakshminarayanan2017simple}. BNNs operate by specifying a prior distribution over network parameters and then following the rules of Bayesian inference to obtain a posterior distribution over these parameters. The corresponding posterior predictive distribution -- i.e.\ the expected predictive density over this posterior -- then captures all relevant uncertainty. Ensembles also average predictive densities, but they do so over models trained with different seeds. BNNs are typically thought of as the more theoretically justified of the two approaches due to their adherence to Bayesian principles, with ensembles often being considered non-Bayesian due to the fact that the corresponding average is (seemingly) not performed over a posterior distribution. Nonetheless, despite being considered more \emph{ad-hoc} and lacking formal justification, ensembles typically outperform BNNs \citep{Abdar2021Survey}.

Providing reliable uncertainty estimates is fundamental in many scientific and safety-critical applications such as medical diagnostics \citep{esteva2017dermatologist}, autonomous driving \citep{bojarski2016end}, medical imaging \citep{litjens2017survey, adnan2022federated}, and more \citep{psaros2023uncertainty}; a lack of such estimates hinders widespread adoption and deployment of neural networks in these domains. A key step towards this goal is to gain an improved understanding of UQ methods in deep learning; why they work, how they are justified, and how they relate to each other. Such an understanding is itself of scientific interest, and insights that arise from it are likely to improve UQ in the future.

In this work, we rely on \emph{empirical Bayes} \citep{robbins1956empirical, carlin2000empirical}, a class of Bayesian procedures where the prior is learned rather than pre-specified. More specifically, we show that deep ensembles are \emph{equivalent} to BNNs where an arbitrarily flexible learnable prior is obtained through maximum marginal likelihood. This result is satisfying on several fronts. First, it shows that ensembles are an empirical Bayes procedure, which justifies ensembles. Second, it sheds light on how these two prominent paradigms are much more closely related to each other than previously realized. 
Lastly, our analysis reveals that the prior implicitly learned by ensembles is particularly strong as it is given by a mixture of point masses; we also show how this helps to make sense of various observed phenomena about ensembles.

\paragraph{What this paper is and what it is not} In this work we establish a simple yet precise and formal connection between deep ensembles and empirical Bayes, showing that ensembles can be interpreted as a particular way of carrying out Bayesian inference for neural networks. This result reveals that deep ensembles are a type of BNN and that these two paradigms are thus not fundamentally distinct. Although we hope that this new understanding of the connection between BNNs and ensembles will lead to enhanced UQ in the future, our goal here is only to establish this connection and to link it to existing work.

\section{Background}\label{sec:background}
Throughout this paper we will consider a dataset $\dataset$ along with a probabilistic model $\lik$ describing how the observed data is assumed to depend on some parameter $\param \in \Param$ of interest. Here $\param$ are the parameters of a neural network, and $\Theta$ the set of possible parameter values. For example, in the classification setting with i.i.d.\ data, $\dataset = \{(x_i, y_i)\}_{i=1}^n$, where $x_i$ represents a feature vector which is treated as a covariate, $y_i$ is a categorical label, and $\lik = \prod_{i} p(y_i \mid \param, x_i)$, where $p(y_i \mid \param, x_i)$ is the probability assigned by the neural network classifier to $y_i$ when given $x_i$ as input. This is the most commonly studied setting for UQ and we will use it throughout our work. Nonetheless, we highlight that our main result applies to any setting with a well-defined likelihood $\lik$.\footnote{For example, $y_i$ could easily represent a regression target instead of a classification label, or in the setting of unsupervised i.i.d.\ generative modelling, $\dataset = \{x_i\}_{i=1}^n$ and $\lik = \prod_i p(x_i \mid \param)$.} In the rest of this section we cover the preliminaries needed for our main result: BNNs, empirical Bayes, variational inference, and deep ensembles.

\subsection{Bayesian Neural Networks}

Most learning algorithms produce a single parameter estimate $\param^*$, for example, maximum-likelihood achieves this by maximizing $\log \lik$ over $\param \in \Param$. For a query point $\query$, the resulting predictive distribution $p(\, \cdot \mid \param^*, \query)$ can be used not only for prediction, but it also aims to encapsulate the \emph{aleatoric uncertainty} (i.e.\ irreducible uncertainty) associated with predicting the label of $\query$.

In contrast with most learning algorithms, Bayesian methods aim to obtain a distribution over $\param$ which can then be leveraged for improved prediction and UQ. At a high level, BNNs operate by first specifying a prior distribution $\priornoarg$ on $\Param$ and then following the rules of Bayesian inference to obtain the corresponding posterior distribution $\posteriornoarg$, given by
\begin{equation}\label{eq:posterior}
    \posterior \propto \prior \, \lik.
\end{equation}

This posterior encapsulates \emph{epistemic uncertainty}, i.e.\ uncertainty arising from a lack of perfect knowledge of the underlying true data-generating process. As such, variability in $p(\, \cdot \mid \param, \query)$ over values of $\param$ sampled from the posterior can be used to quantify epistemic uncertainty. The posterior can also be combined with the predictive distribution to obtain the posterior predictive,
\begin{equation}\label{eq:bnn_pred}
    p(\, \cdot \mid \query) = \E_{\param \sim \pi(\, \cdot \, \mid \dataset)}\left[p(\, \cdot \mid \param, \query)\right];
\end{equation}
this is often called Bayesian or posterior averaging. By virtue of using both the predictive distribution and the posterior, the posterior predictive can be understood as aiming to capture \emph{predictive uncertainty}, i.e.\ the overall uncertainty involved in the prediction of the label of $\query$, comprising both aleatoric and epistemic uncertainties. In turn, the posterior predictive can be expected to produce better-calibrated predictions than a model trained through maximum-likelihood.

In practice computing the posterior and posterior predictive is highly intractable. This issue is typically circumvented by first obtaining $M$ samples $\param_1,\dots, \param_M$ approximately distributed according to the posterior, which can then be used to produce a Monte Carlo estimate of any quantity of interest involving the posterior. For example, the posterior predictive can be estimated as
\begin{equation}\label{eq:bnn_pred_approx}
    p(\, \cdot \mid \query) \approx \dfrac{1}{M} \displaystyle \sum_{m=1}^M p(\, \cdot \mid \param_m, \query).
\end{equation}
We will see how variational inference \citep[VI;][]{wainwright2008graphical, kingma2014auto, rezende2014stochastic, blei2017variational} can be used to obtain $\param_1, \dots, \param_M$ in \autoref{sec:vi} and how this relates to ensembles in \autoref{sec:result}.

\subsection{Empirical Bayes}

Empirical Bayes is a class of procedures where the prior is learned from $\dataset$, rather than fixed beforehand in an attempt to express prior uncertainty about $\param$. Empirical Bayes is a sensible approach towards setting the prior when fixing a prior through expert knowledge is challenging, as is the case in BNNs \citep{wenzel2020good}; it is also principled \citep{carlin2000empirical} and can result in superefficient estimators \citep{james1961estimation, efron1971limiting, efron1972limiting}. When the prior is learnable, we will consider a set of potential priors $\ebset \subset \Delta(\Param)$, where $\Delta(\Param)$ denotes the set of distributions on $\Param$. In this work we will focus on a particular empirical Bayes procedure, namely maximum marginal likelihood, which maximizes $\log \marginallik$ over $\priornoarg \in \ebset$, where
\begin{equation}\label{eq:marginal_lik}
    \marginallik = \E_{\param \sim \priornoarg}[\lik]
\end{equation}
is called the marginal likelihood. We will denote the optimal prior as $\priornoarg^*$ and the corresponding posterior as $\ebposteriornoarg$. Directly optimizing the marginal likelihood is intractable, and VI can also be used towards this goal. We will show in \autoref{sec:result} how deep ensembles implicitly use VI to perform empirical Bayes.

\subsection{Variational Inference}\label{sec:vi}

As previously mentioned, exact Bayesian inference and direct maximum marginal likelihood are both intractable; VI allows for simultaneously performing approximate Bayesian inference and marginal likelihood maximization. VI starts by introducing a family of distributions $\mathcal{Q} \subset \Delta(\Param)$. The premise of VI is to simultaneously find $\priornoarg^* \in \ebset$ through maximum marginal likelihood, along with an element $q^* \in \mathcal{Q}$ such that $q^*$, often called the variational posterior, best approximates the corresponding true posterior, i.e.\ $q^* \approx \ebposteriornoarg$. Formally, this is achieved by introducing the evidence lower bound (ELBO),
\begin{equation}\label{eq:vi_elbo}
    \ELBO \coloneqq \E_{\param \sim q}\left[ \log \lik \right] - \KL\left(q \, \Vert \, \priornoarg \right) = \log \marginallik - \KL \left( q \, \Vert \, \posteriornoarg \right),
\end{equation}
and then obtaining $q^*$ and $\priornoarg^*$ by maximizing the ELBO over ($q$, $\priornoarg) \in \mathcal{Q} \times \ebset$. Note that the ELBO is defined through the first equality in \autoref{eq:vi_elbo}, which can be tractably maximized when $\mathcal{Q}$ is adequately chosen. The second equality in \autoref{eq:vi_elbo} shows why the ELBO provides a sensible objective, even though neither $\log \marginallik$ nor $\KL(q \, \Vert \, \posteriornoarg)$ can be individually evaluated nor straightforwardly approximated: maximizing the ELBO indeed promotes the concurrent maximization of the marginal likelihood and the matching between the variational and true posteriors.

VI is not always used in the context of empirical Bayes; when the prior is fixed, the ELBO is only maximized over $q \in \mathcal{Q}$, which simply approximates the true posterior distribution. The family of distributions $\mathcal{Q}$ is typically chosen so that sampling from its elements is straightforward. Thus, regardless of whether the prior is learned or not, once $q^*$ has been obtained we can simply independently sample $\param_1, \dots, \param_M \sim q^*$ to then approximate the posterior predictive as  in \autoref{eq:bnn_pred_approx}.

\subsection{Deep Ensembles}

Deep ensembles are very simple: they train $M$ models through maximum-likelihood, resulting in parameter values $\param^*_1,\dots,\param^*_M$, and average the models. All these models are trained through the same stochastic gradient-based procedure, for example Adam \citep{kingma2015adam}, except optimization trajectories differ across models due to randomization. The predictive distribution implied by this procedure, which we denote as $\enspred$, can then be written as
\begin{equation}\label{eq:ens_pred}
    \enspred \coloneqq \dfrac{1}{M} \displaystyle \sum_{m=1}^M p(\, \cdot \mid \param^*_m, \query) = \E_{\param \sim \enspi}\left[p(\, \cdot \mid \param, \query)\right],\quad \text{where} \quad \enspi(\param) \coloneqq \dfrac{1}{M} \sum_{m=1}^M \delta_{\param^*_m}(\param),
\end{equation}
with $\delta_{\param^*_m}$ representing a point mass at $\param^*_m$.

The distribution $\enspi$ plays an analogous role to a Bayesian posterior in several ways: it also aims to encapsulate epistemic uncertainty, and \autoref{eq:ens_pred} is highly reminiscent of the posterior predictive in \autoref{eq:bnn_pred} since both compute an expectation of $p(\, \cdot \mid \param, \query)$ over $\param$. However, although deep ensembles produce accurate and well-calibrated models, they are presumed to be less principled than BNNs since $\enspi$ is obtained in a seemingly \emph{ad-hoc} way, whereas the posterior is derived through the principles of Bayesian inference. We also highlight that \autoref{eq:bnn_pred_approx} and \autoref{eq:ens_pred} result in very similar computations as they both average $p(\, \cdot \mid \param, \query)$ over $M$ values of $\param$, but the former does so to approximate an intractable expectation while the latter calculates its respective expectation exactly.

In summary, in spite of some similarities, BNNs and deep ensembles are surmised to be fundamentally different. We will shortly challenge this view by showing how BNNs are linked to deep ensembles via empirical Bayes and VI.

\section{How Deep Ensembles Perform Exact Bayesian Averaging via Empirical Bayes}\label{sec:result}

In this section we will present our main result: that deep ensembles can be derived as implicitly performing maximum marginal likelihood through VI with an arbitrarily flexible prior and flexible enough variational posterior. We begin by stating a simple observation which we will use during our discussion.

\begin{restatable}[]{observation}{prp1}\label{observation1}
    Let $\Param_c$ denote the $c$-level set of the likelihood for $c>0$, i.e.\ $\Param_c \coloneqq \{\param \in \Param : \lik = c\}$. Assume $\Param_c$ is non-empty and let $\priornoarg_c \in \Delta(\Param)$ be a probability distribution placing all its mass on $\Param_c$, i.e.\ $\P_{\param \sim \priornoarg_c}(\param \in \Param_c) = 1$. Then, $\priornoarg_c = \priornoarg_c(\, \cdot \mid \dataset)$.
\end{restatable}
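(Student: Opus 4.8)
The plan is to apply the definition of the posterior in \autoref{eq:posterior} directly to the prior $\priornoarg_c$ and to exploit the fact that the likelihood is constant on the support of $\priornoarg_c$. Concretely, for any measurable set $A \subseteq \Param$, Bayes' rule reads
\begin{equation*}
    \priornoarg_c(A \mid \dataset) = \frac{\int_A \lik \, \mathrm{d}\priornoarg_c(\param)}{\int_\Param \lik \, \mathrm{d}\priornoarg_c(\param)},
\end{equation*}
so it suffices to evaluate both integrals and check they give back $\priornoarg_c(A)$.

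First I would note that since $\P_{\param \sim \priornoarg_c}(\param \in \Param_c) = 1$, every $\priornoarg_c$-integral over $\Param$ may be restricted to $\Param_c$, on which $\lik = c$ by definition of the level set. Hence the numerator equals $c \, \priornoarg_c(A \cap \Param_c) = c\, \priornoarg_c(A)$, and the denominator equals $c\, \priornoarg_c(\Param_c) = c$; this last computation also shows $\marginallik = c$ (cf.\ \autoref{eq:marginal_lik}), so that the marginal likelihood is finite and, because $c > 0$, the normalization is well-defined. Dividing gives $\priornoarg_c(A \mid \dataset) = \priornoarg_c(A)$ for every measurable $A$, which is exactly the claimed equality of the prior and its own posterior.

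The argument is short, and the only point requiring a little care is that $\priornoarg_c$ need not admit a density with respect to any reference measure — it could itself be a point mass — so I would phrase the computation in terms of integration against the measure $\priornoarg_c$ rather than manipulating density functions, and explicitly invoke $c > 0$ to guarantee the posterior is defined. I do not expect any genuine obstacle here: the statement is essentially a one-line consequence of the likelihood being flat on the support of the prior, and the write-up is mainly about phrasing it at a level of generality that makes it directly reusable for the mixture-of-point-masses priors considered later.
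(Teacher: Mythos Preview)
Your proposal is correct and follows essentially the same approach as the paper: both arguments use Bayes' rule and the fact that the likelihood is constant (equal to $c$) on the support of $\priornoarg_c$, so the posterior reduces to the prior. The only difference is cosmetic --- you phrase the computation via integrals against the measure $\priornoarg_c$, whereas the paper works with the density/proportionality form of \autoref{eq:posterior} and adds a footnote acknowledging that a base measure must be chosen for this to make sense; your formulation sidesteps that footnote but the underlying idea is identical.
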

\begin{proof}
    Recall that the support of a posterior distribution must be contained in the support of its corresponding prior, and thus the proportionality in \autoref{eq:posterior} holds over the support of the prior, i.e.\ \hbox{$\priornoarg_c(\param \mid \dataset) \propto \priornoarg_c(\param) \lik$} is valid over the support of $\priornoarg_c$. Since by assumption the support of $\priornoarg_c$ is contained in $\Param_c$, it follows that\footnote{Note that in \autoref{eq:level_set_pos} we are treating the prior and the posterior as densities even though these distributions need not admit a density with respect to some standard base measure (e.g.\ the Lebesgue or counting measures). There is nonetheless no error in \autoref{eq:level_set_pos} when both the prior and posterior are understood as densities with respect to the same (potentially not standard) base measure on $\Delta(\Param)$.}
    \begin{equation}\label{eq:level_set_pos}
        \priornoarg_c(\param \mid \dataset) \propto \priornoarg_c(\param) \lik = \priornoarg_c(\param)c \propto \priornoarg_c(\param).
    \end{equation}
    Since two distributions can be proportional to each other only if they are equal, the equation above implies that $\priornoarg_c = \priornoarg_c(\, \cdot \mid \dataset)$.
\end{proof}

Note that \autoref{observation1} can be informally restated as saying that if the likelihood does not depend on $\param$ over the support of the prior, then the posterior matches the prior. We will soon see that the prior implicitly learned by deep ensembles is actually supported on a set where the likelihood does not depend on $\param$.

We now formally state the assumptions behind our main result.

\begin{restatable}[]{assumption}{assm1}\label{assumption1}
    The likelihood function $p(\dataset \mid \cdot \,): \Param \rightarrow [0, \infty)$ achieves its maximum, i.e.\ there exists at least one parameter value $\param^* \in \Param$ such that $\lik \leq p(\dataset \mid \param^*)$ for every $\param \in \Param$. We denote the set of all such likelihood maximizers as $\Param^*$. Aditionally, we assume the likelihood is strictly positive under any of these maximizers, i.e.\ $p(\dataset \mid \param^*) > 0$.
\end{restatable}

\begin{restatable}[]{assumption}{assm2}\label{assumption2}
    The learnable prior is arbitrarily flexible in the sense that $\ebset = \Delta(\Param)$.
\end{restatable}

\begin{restatable}[]{assumption}{assm3}\label{assumption3}
    The variational posterior is flexible enough in the sense that if $\priornoarg^* \in \ebset$ maximizes the marginal likelihood $\marginallik$ from \autoref{eq:marginal_lik}, i.e.\ $\marginallik \leq p_{\priornoarg^*}(\dataset)$ for every $\priornoarg \in \ebset$, then $\ebposteriornoarg \in \mathcal{Q}$.
\end{restatable}

\autoref{assumption1} is sensible, and for example holds whenever the model is not pathological in the sense that at least one parameter value assigns positive likelihood to the observed data, the likelihood is continuous in $\param$, and $\Param$ is compact (which is implicitly enforced whenever weight clipping is used). Note that we are not assuming that the likelihood maximizer is unique; indeed, neural networks being overparameterized suggests $\Param^*$ should have many (potentially infinitely many) elements. For example, \citet{garipov2018loss} empirically found the existence of contiguous regions in the parameter space of neural networks corresponding to optimal (or near optimal) loss values. Note also that \autoref{assumption2} and \autoref{assumption3} need not hold when a learnable prior or variational posterior are instantiated as neural networks; rather, these two assumptions are needed to characterize the optima of the ELBO under an idealized highly flexible VI procedure, which we will shortly link to deep ensembles. We carry out this characterization in our main result below.

\begin{restatable}[]{proposition}{main}\label{main_result}
    Let $(q^*, \priornoarg^*) \in \mathcal{Q} \times \ebset$. Then, under \autoref{assumption1}, \autoref{assumption2}, and \autoref{assumption3}$, (q^*, \priornoarg^*)$ maximizes the ELBO in \autoref{eq:vi_elbo}, i.e.\ $\ELBO \leq \text{ELBO}(q^*, \priornoarg^*)$ for every $(q, \priornoarg) \in \mathcal{Q} \times \ebset$, if and only if the following two properties hold:
    \begin{enumerate}[label=(\Alph*)]
        \item $q^*$ places all of its mass on $\Param^*$, i.e.\ $\P_{\param \sim q^*}(\param \in \Param^*) = 1$.
        \item The prior and variational posterior match, i.e.\ $\priornoarg^* = q^*$.
    \end{enumerate}
\end{restatable}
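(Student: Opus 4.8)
The plan is to show that the ELBO over $\mathcal Q\times\ebset$ is bounded above by $L^*\coloneqq\log p(\dataset\mid\param^*)$ for any likelihood maximizer $\param^*\in\Param^*$ (a value that does not depend on the choice of $\param^*$), that this bound is attained, and that the pairs attaining it are exactly those satisfying (A) and (B); since maximizing the ELBO over $\mathcal Q\times\ebset$ is then the same as attaining the value $L^*$, the claimed equivalence follows.

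First I would prove the upper bound, working from the second expression for the ELBO in \autoref{eq:vi_elbo}. Since $\KL(q\Vert\posteriornoarg)\ge 0$, for every $(q,\priornoarg)\in\mathcal Q\times\ebset$ we have $\text{ELBO}(q,\priornoarg)\le\log\E_{\param\sim\priornoarg}[\lik]$; by \autoref{assumption1} we have $\lik\le p(\dataset\mid\param^*)$ for all $\param\in\Param$, so $\E_{\param\sim\priornoarg}[\lik]\le p(\dataset\mid\param^*)$ and hence $\text{ELBO}(q,\priornoarg)\le L^*$ (which is finite by the strict-positivity clause of \autoref{assumption1}). At this point I would also record that $\Param^*$ is exactly the $p(\dataset\mid\param^*)$-level set $\{\param\in\Param:\lik=p(\dataset\mid\param^*)\}$, so that \autoref{observation1}, applied with $c=p(\dataset\mid\param^*)>0$, shows any distribution supported on $\Param^*$ equals its own posterior.

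Next I would treat the ``if'' direction and attainment together. If (A) and (B) hold, then $\priornoarg^*=q^*$ is supported on $\Param^*$, so $p_{\priornoarg^*}(\dataset)=\E_{\param\sim\priornoarg^*}[\lik]=p(\dataset\mid\param^*)=e^{L^*}$, and by \autoref{observation1} the posterior under $\priornoarg^*$ equals $\priornoarg^*$ itself, so $\KL(q^*\Vert\ebposteriornoarg)=\KL(\priornoarg^*\Vert\priornoarg^*)=0$; hence $\text{ELBO}(q^*,\priornoarg^*)=\log p_{\priornoarg^*}(\dataset)-\KL(q^*\Vert\ebposteriornoarg)=L^*-0=L^*$. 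Moreover such a pair exists in $\mathcal Q\times\ebset$: fixing $\param^*\in\Param^*$ (nonempty by \autoref{assumption1}) and taking $\priornoarg^\dagger=q^\dagger\coloneqq\delta_{\param^*}$, we have $\priornoarg^\dagger\in\ebset$ by \autoref{assumption2}, and since $\delta_{\param^*}$ maximizes the marginal likelihood and (by \autoref{observation1}) is its own posterior, \autoref{assumption3} gives $q^\dagger=\delta_{\param^*}\in\mathcal Q$; as this $(q^\dagger,\priornoarg^\dagger)$ satisfies (A) and (B), it attains $\text{ELBO}=L^*$. Combining with the upper bound, $\max_{\mathcal Q\times\ebset}\text{ELBO}=L^*$, so every pair in $\mathcal Q\times\ebset$ satisfying (A) and (B) is a maximizer, which is the ``if'' direction.

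For the ``only if'' direction I would take a maximizing pair $(q^*,\priornoarg^*)$, so that $\text{ELBO}(q^*,\priornoarg^*)=L^*$ by the previous step together with the upper bound, and then force equality through $\text{ELBO}(q^*,\priornoarg^*)=\log p_{\priornoarg^*}(\dataset)-\KL(q^*\Vert\ebposteriornoarg)\le\log p_{\priornoarg^*}(\dataset)\le L^*$. This yields simultaneously $\KL(q^*\Vert\ebposteriornoarg)=0$ and $\E_{\param\sim\priornoarg^*}[\lik]=p(\dataset\mid\param^*)$; the latter, combined with $\lik\le p(\dataset\mid\param^*)$ pointwise, forces $\lik=p(\dataset\mid\param^*)$ for $\priornoarg^*$-almost every $\param$, i.e.\ $\P_{\param\sim\priornoarg^*}(\param\in\Param^*)=1$, which is (A). Since $\priornoarg^*$ is then supported on $\Param^*$, \autoref{observation1} gives $\ebposteriornoarg=\priornoarg^*$; substituting this into $\KL(q^*\Vert\ebposteriornoarg)=0$ and using that a $\KL$ divergence vanishes only between identical distributions yields $q^*=\priornoarg^*$, which is (B). The conceptual heart of the proof is this use of \autoref{observation1}: the ELBO collapses to the plain maximal log-likelihood precisely because an optimal learned prior must concentrate on a likelihood level set, on which the posterior coincides with the prior, killing the $\KL$ term and forcing $q^*=\priornoarg^*$. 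The only genuinely technical point I expect to need care is passing from the equality of expectations to $\P_{\param\sim\priornoarg^*}(\param\in\Param^*)=1$ — the elementary fact that a random variable almost surely bounded above by a constant and having mean equal to that constant is almost surely equal to it — together with the measure-theoretic bookkeeping already flagged in the footnote to \autoref{observation1}.
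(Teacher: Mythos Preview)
Your proof is correct. It differs from the paper's in organization and emphasis, though the underlying ideas are the same. The paper works primarily with the first form of the ELBO, $\E_{\param\sim q}[\log\lik]-\KL(q\Vert\priornoarg)$, and handles the ``only if'' direction by two separate contradiction arguments: first assuming $\KL(q^*\Vert\priornoarg^*)>0$ to force (B) (using \autoref{assumption2} to compare against $(q^*,q^*)$), then assuming (A) fails to obtain $p_{\priornoarg^*}(\dataset)<p_{\hat\priornoarg}(\dataset)$ for some $\hat\priornoarg$ supported on $\Param^*$ and comparing against $(\hat\priornoarg(\,\cdot\mid\dataset),\hat\priornoarg)$ via \autoref{assumption3}. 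Notably, the paper does not invoke \autoref{observation1} inside the proof at all. You instead work with the second form $\log p_\priornoarg(\dataset)-\KL(q\Vert\posteriornoarg)$, establish the global upper bound $L^*$, exhibit an explicit attaining pair $(\delta_{\param^*},\delta_{\param^*})$ to pin down $\max\text{ELBO}=L^*$, and then read off both conditions directly from the equality case of the chain $\text{ELBO}\le\log p_{\priornoarg^*}(\dataset)\le L^*$, using \autoref{observation1} to convert ``$\priornoarg^*$ supported on $\Param^*$'' into ``posterior equals prior'' and hence $q^*=\priornoarg^*$. Your route is a bit more streamlined and makes \autoref{observation1} do real work; the paper's route keeps the two conditions (A) and (B) more modular. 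One small labeling slip: when you write ``$\P_{\param\sim\priornoarg^*}(\param\in\Param^*)=1$, which is (A)'', that is (A) for $\priornoarg^*$, not $q^*$; (A) as stated follows only after you obtain (B) in the next line. The logic is fine, just say so explicitly.
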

\begin{proof}
    We begin by proving that if $(A)$ and $(B)$ hold, then $(q^*, \priornoarg^*)$ maximizes the ELBO. Let $(q, \priornoarg) \in \mathcal{Q} \times \ebset$ and assume $(A)$ and $(B)$ hold. We have:
    \begin{align}
        \ELBO & = \E_{\param \sim q}\left[ \log \lik \right] - \KL\left(q \, \Vert \, \priornoarg \right) \leq \E_{\param \sim q}\left[ \log \lik \right] \leq \E_{\param \sim q^*}[\log \lik] \label{eq:elbo_ineq}\\
        & = \E_{\param \sim q^*}\left[ \log \lik \right] - \KL\left(q^* \, \Vert \, \priornoarg^* \right) = \text{ELBO}(q^*, \priornoarg^*),\label{eq:elbo_eq}
    \end{align}
    where the last inequality in \autoref{eq:elbo_ineq} follows from $(A)$ and the first equality in \autoref{eq:elbo_eq} follows from $(B)$. Thus, $(q^*, \priornoarg^*)$ maximizes the ELBO. The equations above also show that the maximal value achieved by the ELBO is $\E_{\param \sim q^*}[\log \lik]$, which by \autoref{assumption1} is neither $-\infty$ nor $\infty$.
    
    We now assume that $(q^*, \priornoarg^*)$ maximizes the ELBO and will prove that $(A)$ and $(B)$ hold. We proceed by contradiction and assume that $(B)$ does not hold, meaning that $\KL(q^* \, \Vert \, \priornoarg^*) > 0$. Since the maximal value of the ELBO is finite it follows that $- \infty < \E_{\param \sim q^*}\left[ \log \lik \right] < \infty$, and we then have that
    \begin{align}
        \text{ELBO}(q^*, \priornoarg^*) & = \E_{\param \sim q^*}\left[ \log \lik \right] - \KL\left(q^* \, \Vert \, \priornoarg^* \right) < \E_{\param \sim q^*}\left[ \log \lik \right] - \KL\left(q^* \, \Vert \, q^* \right)\\
        & = \text{ELBO}(q^*, q^*).
    \end{align}
    By \autoref{assumption2} $q^* \in \ebset$, so the above equations show that $(q^*, \priornoarg^*)$ does not maximize the ELBO. This is a contradiction and thus $(B)$ must hold. It remains to prove that $(A)$ also holds, which we now do.
    
    Let $\hat{\priornoarg} \in \Delta(\Param)$ be a probability distribution placing all of its mass on $\Param^* \subset \Param$ and let $\priornoarg \in \ebset$, so that by \autoref{assumption1}, $p_\priornoarg(\dataset) = \E_{\param \sim \priornoarg}\left[\lik \right] \leq \E_{\param \sim \hat{\priornoarg}}\left[\lik \right] = p_{\hat{\priornoarg}}(\dataset)$ with $0 < p_{\hat{\priornoarg}}(\dataset) < \infty$. It follows that $\hat{\priornoarg}$ maximizes the marginal likelihood, and by \autoref{assumption3}, $\hat{\priornoarg}(\, \cdot \mid \dataset) \in \mathcal{Q}$. We now also proceed by contradiction and assume $(A)$ does not hold, so that $p_{\priornoarg^*}(\dataset) < p_{\hat{\priornoarg}}(\dataset)$. We then have two cases:
    \begin{enumerate}
        \item $\KL \left(q^* \, \Vert \, \ebposteriornoarg \right) < \infty$, which implies that
        \begin{align}
        \text{ELBO}(q^*, \priornoarg^*) & = \log p_{\priornoarg^*}(\dataset) - \KL \left( q^* \, \Vert \, \ebposteriornoarg \right) < \log p_{\hat{\priornoarg}}(\dataset) - \KL \left( q^* \, \Vert \, \ebposteriornoarg \right) \label{eq:strict_inequality}\\
        & \leq \log p_{\hat{\priornoarg}}(\dataset) = \log p_{\hat{\priornoarg}}(\dataset) - \KL \left( \hat{\priornoarg}(\, \cdot \mid \dataset) \, \Vert \, \hat{\priornoarg}(\, \cdot \mid \dataset) \right) = \text{ELBO}\left(\hat{\priornoarg}(\, \cdot \mid \dataset), \hat{\priornoarg}\right).
    \end{align}
    \item $\KL \left(q^* \, \Vert \, \ebposteriornoarg \right) = \infty$, in which case the strict inequality in \autoref{eq:strict_inequality} does not hold anymore as both sides would be $-\infty$. Nonetheless, \autoref{assumption1} ensures that in this case we still have that
    \begin{equation}
        \text{ELBO}(q^*, \priornoarg^*) = - \infty < \log p_{\hat{\priornoarg}}(\dataset) = \text{ELBO}\left(\hat{\priornoarg}(\, \cdot \mid \dataset), \hat{\priornoarg}\right).
    \end{equation}
    \end{enumerate}
    In short, $\text{ELBO}(q^*, \priornoarg^*) < \text{ELBO}(\hat{\priornoarg}(\, \cdot \mid \dataset), \hat{\priornoarg})$ holds in both cases. By \autoref{assumption2} $\hat{\priornoarg} \in \ebset$, and so it follows that $(q^*, \priornoarg^*)$ does not maximize the ELBO. This is again a contradiction, and thus $(A)$ holds.
\end{proof}

\autoref{main_result} characterizes optimal priors $\priornoarg^*$ according to maximum marginal likelihood under an arbitrarily flexible prior model $\ebset$: these priors must place all their mass on $\Param^*$, the set of maximum-likelihood estimators. Crucially, this is exactly how deep ensembles set $\enspi$ in \autoref{eq:ens_pred}: by running maximum-likelihood multiple times ensembles are simply finding $M$ points in $\Param^*$, i.e.\ $\param^*_m \in \Param^*$ for $m=1,\dots,M$, ensuring that $\enspi$ is indeed an optimal prior as prescribed by \autoref{main_result}. Furthermore, if we write \hbox{$c^* = \max_{\param \in \Param} \lik$}, which is well-defined thanks to \autoref{assumption1}, then $\Param_{c^*} = \Param^*$ and by \autoref{observation1} it follows that $\enspi$ is not only equal to an empirical Bayes prior, but it is also equal to the corresponding posterior, i.e.\ $\enspi = \enspi(\cdot \mid \dataset)$.\footnote{It is well-known that maximizing the ELBO results in the variational posterior matching the true posterior when $\mathcal{Q}$ is flexible enough; this remains true in the setting of \autoref{main_result} where the empirical Bayes prior matches the variational posterior since these two distributions are both also equal to the true posterior.} In turn, $\enspi$ is an actual posterior distribution, rather than simply playing an analogous yet \emph{ad-hoc} role to one as commonly thought. It also follows from this reasoning that $\enspred$ is an actual posterior predictive distribution, an thus deep ensembles perform \emph{exact} Bayesian averaging. In summary, deep ensembles are equivalent to a particular way of implicitly performing empirical Bayes through maximum marginal likelihood; this justifies ensembles and formally links them to BNNs.

The insights arising from \autoref{main_result} go beyond merely interpreting deep ensembles through empirical Bayes. Whenever $\Param^*$ has more than a single element, there are infinitely many optimal priors $\priornoarg^*$ assigning all their probability mass to $\Param^*$, and deep ensembles simply find one such prior. These priors are optimal in the sense of maximizing the marginal likelihood, but \autoref{main_result} does not provide  a criterion to determine which of all these priors will result in optimal UQ. Intuitively, selecting for the most diverse of these priors might improve UQ and thus lead to empirical improvements for ensembles. It could also be argued that all these priors, despite being optimal from the perspective of maximizing marginal likelihood, are too strong as they trivialize Bayesian inference since they are equal to their corresponding posteriors. It might also be the case that following an empirical Bayes procedure where $\ebset$ is restricted in such a way that the corresponding ELBO-maximizing $\priornoarg^*$ concentrates most of its mass around $\Param^*$ -- as opposed to all of its mass exactly on $\Param^*$ -- would also empirically improve UQ. Nonetheless, we highlight that instantiating a method to implement any of these potential improvements is not trivial and falls outside of the scope of our work.

\section{Relationship to Existing Work}\label{sec:related}
\paragraph{Deep ensembles} Despite approaching a decade of existence, deep ensembles \citep{lakshminarayanan2017simple} remain a gold standard for UQ with neural networks, with most follow-up work focusing on improving computational efficiency rather than performance \citep{huang2017snapshot, garipov2018loss, wen2020batchensemble, havasi2021training}. As previously mentioned, ensembles are often explicitly described as non-Bayesian; our work disproves this conventional understanding and provides intuitions that will hopefully lead to future performance improvements. \citet{wu2024posterior} proposed one of the rare methods whose goal is to outperform deep ensembles at UQ. Interestingly, despite not connecting ensembles and empirical Bayes, their motivation is precisely that $\enspi$ being a mixture of point masses is overly strong. We thus see the good empirical results of \citet{wu2024posterior} as evidence supporting the intuitions developed from our newfound understanding of deep ensembles. We also highlight the works of \citet{abe2022deep} and \hbox{\citet{abe2024pathologies}}, who challenge the common view that ensembles succeed at UQ due to the predictive diversity of their averaged models. These works conclude that although ensembles improve upon the accuracy and UQ of their base models, a single larger model matching the ensembles' accuracy will also match its performance at UQ. In other words, ensembling is a good way to increase accuracy and more accurate models tend to be better at UQ; it is this increase in accuracy, rather than predictive diversity, which drives the empirical improvements in UQ obtained by ensembling. 
These findings are once again consistent with the intuition we have developed: using a prior such as $\enspi$ which concentrates all its mass on $\Param^*$ is likely overly strong and may well be suboptimal 
in terms of the predictive diversity that it induces.

\paragraph{BNNs} Bayesian methods to express uncertainty over the weights of neural networks take various forms. Some works use VI by instantiating a restrictive variational family $\mathcal{Q}$ so as to make direct maximization of the ELBO tractable \citep{graves2011practical, blundell2015weight, louizos2016structured, louizos2017multiplicative, wu2019deterministic, osawa2019practical}. \citet{gal2016dropout} showed that using dropout \citep{srivastava2014dropout} at test time can be loosely interpreted as sampling from a (once again restrictive) variational posterior. Our view of deep ensembles deviates from these BNNs in that the variational posterior implicitly used by ensembles is much more flexible. Other approaches forgo the use of VI altogether, for example by using Markov chain Monte Carlo to approximately sample from the posterior \citep{welling2011bayesian, chen2014stochastic, zhang2020cyclical}. Using a second-order Taylor expansion of the log posterior results in a Gaussian approximation known as Laplace approximation; this idea has also been leveraged in the context of Bayesian deep learning \citep{ritter2018scalable, kristiadi2020being, daxberger2021laplace}. All the aforementioned BNNs use simplistic priors such as centred Gaussians with identity or diagonal covariance; this is a key difference between common BNN methods and our empirical Bayes view of ensembles, which allows for much richer priors. An interesting consequence of using such flexible priors is that ensembles are the \emph{only} BNNs which perform exact posterior averaging -- all the other BNNs mentioned above rely on approximations in one way or another.

\paragraph{Empirical Bayes in modern machine learning} Empirical Bayes has uses in machine learning beyond quantifying uncertainty in neural network weights. For example, in the context of variational autoencoders \citep{kingma2014auto, rezende2014stochastic} learning the prior is a widespread practice which improves performance \citep{sonderby2016ladder, chen2017variational, tomczak2018vae, dai2019diagnosing, pang2020learning, vahdat2020nvae, child2021very, vahdat2021score, loaiza2022diagnosing, loaiza-ganem2024deep}. In Gaussian processes, learning prior hyperparameters by maximizing the marginal likelihood is equally prevalent \citep{10.7551/mitpress/3206.001.0001, titsias2009variational, wilson2016deep, gardner2018gpytorch}, and empirical Bayes is also useful in the context of denoising generative models \citep{saremi2019neural}. \citet{wenzel2020good} performed a large-scale study finding, among other things, that the use of simplistic priors is a potential cause for the underperformance of Bayesian averaging in BNNs. Several of the BNN methods mentioned in the above paragraph use empirical Bayes to learn the variance of their Gaussian prior; doing so always results in clear empirical improvements over using fixed isotropic Gaussians as priors. These improvements are both unsurprising in light of the benefits brought forth by empirical Bayes, and consistent with the results of \citet{wenzel2020good} since these learned Gaussian priors are slightly less simplistic than isotropic ones. The difference between how deep ensembles and these other BNNs apply empirical Bayes lies once again in the flexibility of the learnable prior. Importantly, our interpretation of deep ensembles offers a simple potential explanation for their empirical success over other BNNs: when learning the prior, it is likely better to be overly flexible rather than overly restrictive. We nonetheless highlight that the reason this flexibility helps is unclear; it could simply be that a more complex $\ebset$ allows to better express prior uncertainty and thus results in improved UQ, but it could also be the case that the improvements come from ensembles circumventing the need for approximate Bayesian averaging. We believe that further exploring these ideas is a compelling avenue for future work.

\paragraph{On the relationship between deep ensembles and BNNs} Connections between BNNs and ensembles have been studied before, albeit not from the perspective of empirical Bayes. \citet{wild2024rigorous} use Wasserstein gradient flows \citep{ambrosio2008gradient} to study an idealized version of the optimization dynamics of VI and deep ensembles; they conclude that ensembles are not Bayesian unless ensemble members are trained jointly and an explicit term encouraging diversity is added to their training objective \citep{d2021repulsive}. Note that there is no contradiction with our results here since the prior is fixed in the work of \hbox{\citet{wild2024rigorous}} whereas it is learnable in our Bayesian interpretation of deep ensembles. Closer to our work is that of \citet{wilson2020bayesian}, who claim that deep ensembles perform approximate Bayesian averaging. They argue that for common choices of prior (e.g.\ Gaussian) and likelihood used in BNNs, the resulting posterior $\posteriornoarg$ will naturally concentrate most of its mass around $\Param^*$, so that the maximum-likelihood estimates $\param^*_m$ used by ensembles can be viewed as approximate samples from $\posteriornoarg$; when this holds, the averaging done by ensembles in \autoref{eq:ens_pred} to obtain $\enspred$ can indeed be understood as an approximation of the Monte Carlo estimate in \autoref{eq:bnn_pred_approx} (which is itself an approximation of the true posterior predictive $p(\, \cdot \mid \query)$ in \autoref{eq:bnn_pred}). The connection between deep ensembles and BNNs established by \citet{wilson2020bayesian} provides sensible intuitions, and empirical evidence suggests that $\enspred$ can indeed be close to $p(\, \cdot \mid \query)$ \citep{izmailov2021bayesian}. Nonetheless we highlight that in contrast to ours, this connection is only approximate, not rigoruous, and assumes the prior is fixed beforehand.

\section{Conclusions}\label{sec:conclusion}
In this paper we introduced a simple and fresh perspective for reasoning about deep ensembles and their relationship to BNNs. Our work serves as a position paper about understanding ensembles, and in contrast with previous work which has viewed ensembles as either non-Bayesian or merely approximating posterior averaging, we showed that through the lens of empirical Bayes, ensembles are the only BNN which performs exact posterior averaging. Our developed understanding of ensembles is consistent with and expands upon existing literature, and it is our hope that it will be useful towards improving ensembles in the future.



\subsubsection*{Acknowledgments}
We thank Brendan Ross for insightful discussions and for having provided feedback on our manuscript.

\bibliography{main}
\bibliographystyle{tmlr}


\end{document}